\title{Comparing Fairness Criteria Based on Social Outcome}
\newtheorem{theorem}{Theorem}
\newtheorem{proposition}{Proposition}
\newtheorem{proof}{Proof}
\newtheorem{assumption}{Assumption}
\newtheorem{definition}{Definition}
\newtheorem{example}{Example}
\newcommand{\Prob}{\mathbb{P}}
\newcommand{\Real}{\mathbb{R}}
\newcommand{\mX}{\mathcal{X}}
\newcommand{\nn}{\nonumber\\}
\newcommand{\Normal}{\mathcal{N}}
\newcommand{\Ind}{\mathbb{I}}
\newcommand{\haty}{\hat{y}}
\newcommand{\rd}{\mathrm{d}}
\newcommand{\SW}{\mathrm{SW}}
\newcommand{\AW}{\mathrm{AW}}
\newcommand{\FW}{\mathrm{FW}}
\newcommand{\FP}{\mathrm{FP}}
\newcommand{\TP}{\mathrm{TP}}
\author{
  Junpei Komiyama \\
  The University of Tokyo\\
  \texttt{junpei@komiyama.info} \\
  \and
  Hajime Shimao \\
  Santa Fe Institute \\
  \texttt{hajime.fr@gmail.com} \\
}
\date{}
\begin{document}
\begin{CJK*}{UTF8}{zhsong}

\maketitle

\begin{abstract}
Fairness in algorithmic decision-making processes is attracting increasing concern. When an algorithm is applied to human-related decision-making an estimator solely optimizing its predictive power can learn biases on the existing data, which motivates us the notion of fairness in machine learning.
while several different notions are studied in the literature, little studies are done on how these notions affect the individuals. 
We demonstrate such a comparison between several policies induced by well-known fairness criteria, including the color-blind (CB), the demographic parity (DP), and the equalized odds (EO). We show that the EO is the only criterion among them that removes group-level disparity. Empirical studies on the social welfare and disparity of these policies are conducted.
\end{abstract}

\section{Introduction}
\label{sec_intro}
The goal of the supervised learning is to estimate label $y$ by learning an estimator $\haty(X)$ as a function of associated feature $X$. 
Arguably, an estimator of better predictive power is preferred, and standard supervised learning algorithm learns $\haty(X)$ from existing data.
However, when it is applied to human-related decision-making, such as employment, college admission, and credit, an estimator optimizing its predictive power can learn biases on the existing data. 
To address this issue, fairness-aware machine learning proposes methodologies that yield predictors that not only have better predictive power but also complies with some notion of non-discrimination. 

Let $s$ be the (categorical) sensitive attribute among $X$ that represent the applicants' identity (e.g., gender or race). 
Group level fairness concerns the inequality among groups of different $s$.
A naive approach, which we call \textit{color-blind} \cite{CL93}, is to remove $s$ from $X$ in predicting $\haty$: Although such an approach avoids direct discrimination through $s$, the correlation between $s$ and the other attributes in $X$ causes indirect discrimination, which is referred to as the disparate impact. 
Another notion of fairness, which is widely studied (e.g., \cite{DBLP:conf/pkdd/KamishimaAAS12,DBLP:conf/cikm/RistanoskiLB13,DBLP:conf/icml/ZemelWSPD13,DBLP:conf/pkdd/FukuchiSK13}), is demographic parity (DP). DP requires the independence of $s$ from $\haty$. For instance, a university admission comply with DP if each group has equal access to the university. The demographic parity is justified in the legal context in labor market: The U.S. Equal Employment Opportunity Commission \cite{eeoc} clarified the so-called 80\%-rule, that prohibits employment decisions of non-negligible inequality. 
In spite of such legal background, some concerns on DP are raised. Hardt et al.\,\cite{DBLP:conf/nips/HardtPNS16} argued that DP is incompatible with the perfect classifier $\haty = y$, and thus it is not appropriate when the true label $y$ is reliable. To address this issue, they provided an alternative notion of fairness called the equalized odds (EO), which requires the independence of $s$ from $\haty$ conditioned on $y$ and thus allows $\haty = y$. Note that essentially the same notion is also proposed in Zafar et al.\,\cite{DBLP:conf/www/ZafarVGG17}, and the notion of the counterfactual fairness \cite{KusnerLRS17} is similar to EO given a specific causal modeling. Note that DP and EO are mutually incompatible \cite{KleinbergMR17}.

Despite massive interest in the fairness in machine learning, only a few of them concerned on the resulting social impact of a policy based on the proposed notion of fairness produces. The result of a policy is far from straightforward: In some case, an introduction of a naive notion of fairness can be harmful:
For example, consider the case of a university admission policy. If the admission office discriminates blacks by believing they are less likely to perform well academically and lowers their admission standard for them to propel affirmative action, blacks may be discouraged to invest in their education because they pass the admission regardless of their effort. As a result, blacks may end up being less proficient and the negative stereotype ``self-perpetuates''.
Indeed, self-fulfillment of stereotype is an empirically documented phenomenon in some fields \cite{glover2017discrimination}.
The difficulty of analyzing this phenomenon lies in the interaction between the policy-maker and the applicants: when a policy changes, the applicants also change their behavior due to a modified incentive.

This lack of interest in the social outcome, in turn, results in the absence of a unified measure to compare different fairness criteria. 
In this regard, economic theory offers useful tools. In particular, literature in labor economics has a long history of analysis of welfare implication of policy changes. That is, economists investigate how the players' welfare, or aggregate level of their utility, changes by imposing a policy. 

By combining the theoretical framework developed in labor economics with the ``oblivious'' post-processing non-discriminatory machine learning \cite{DBLP:conf/nips/HardtPNS16}, we propose a framework of comparison between different fairness notions in view of the incentives.
We demonstrate such a comparison between several policies induced by well-known fairness criteria; color-blind (CB), demographic parity (DP), and equalized odds (EO). As a result, we show that while CB and DP sometimes disproportionately discourage unfavored groups from investing in the improvement of their value, EO equally incentivizes the two groups. 

Importantly, our framework is not just theoretical but applicable to practices and enables to assess the fairness notions based on the actual situation.
To demonstrate this point, we compare the fairness policies by using a real-world dataset: We show that (i) Unlike CB and DP, EO is disparity-free. Moreover, (ii) all of the CB, DP, and the EO tend to reduce social welfare compared to no fairness intervention case. Among them, EO yielded the lowest social welfare: One can view this as a cost of removing disparity. 

\subsection{Related work}


A long line of works on discrimination and affirmative action policy exists in the literature of labor economics(\cite{arrow1998has};\cite{holzer2000assessing};see Fang and Moro\,\cite{fm2011} for a survey of recent theoretical frameworks). Coate and Loury \cite{CL93} considered a simple model where an employer infers applicants' productivity based on one-dimensional signal, which contains information about their invested effort in skill. This nominal paper argues that even under the affirmative action policy to enforce the employer to set the same rate of hiring to all the groups, there still exist equilibria where one group is negatively stereotyped, and consequently, discouraged from investing in skills.

The problem of those analyses in economics is that their setting is abstract and simplified so that they do not allow us real-world applications with actual datasets. For instance, based on their simple model, Coate and Loury \cite{CL93} states that ``The simplest intervention would insist that employers make color-blind assignments'' and it would ensure the fairness as well as the same incentives across groups. However, it is commonly perceived in machine learning that color-blind policy does not ensure fairness due to disparate impact \cite{sweeney2013,misc:219,pmlr-v81-buolamwini18a}. 
Due to a lack of consideration on such learning-from-data process and related issues, frameworks proposed in economics are not designed for the real-world application. This paper modifies their models to be applicable to machine learning problems.
More importantly, their main interest lies in affirmative action: 
While affirmative action that imposes a restriction on the outcome such as the ratio of admitted students (which is similar to demographic parity) is arguably important, modern machine learning algorithms propose various methodologies to ensure the fairness at the prediction level, not the outcome level.

A few papers in machine learning considered a game-theoretic view of decision-making processes and thus enable us to compare fairness criteria. In particular, the closest papers to ours are \cite{HuC18,delayedarxiv}. Hu and Chen\,\cite{HuC18} considered a two-stage process and each stage dealt with group-level and individual-level fairness, whereas we are focusing on comparing the several notions of group-level fairness.
Liu et al.\,\cite{delayedarxiv} compared several notions of fairness including the demographic parity and the equalized opportunity in terms of its long-term improvements and characterized the conditions where each of these fairness-related constraints works. Unlike ours, the analysis in Liu et al.\,\cite{delayedarxiv} assumes the availability of the function that determines how the delayed impact from the prediction arises. Identification of such a function requires us counter-factual experiments or model-dependent analyses.
Moreover, they evaluate the fairness criteria by the disparity between groups, without analyzing the social welfare.
By assuming a model with micro-foundation of players' decision-making, we are able to compare the welfare implication of different fairness criteria.

\section{Model}

\begin{figure}
\centering
\includegraphics[width=0.8\textwidth]{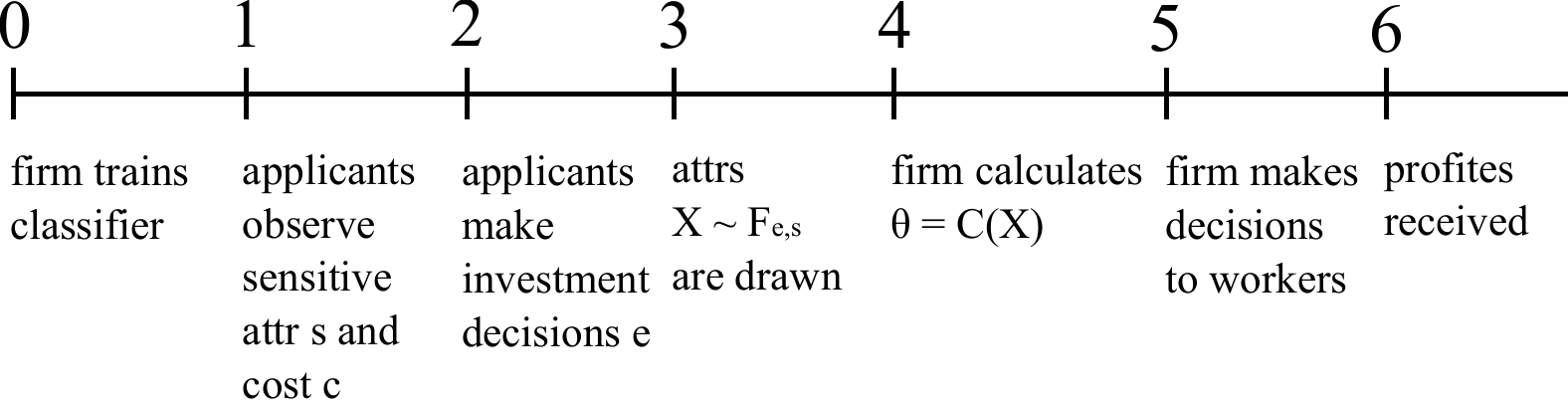}
\caption{Sequence of timings.}
\label{fig_timeline}
\end{figure}%

We consider a game between a continuum of applicants and a single firm. 
The game models application processes, such as university admissions, job applications, and credit card applications.
A firm has a dataset on the performance of applicants and uses it to estimate the performance of the future applicants. For the ease of discussion, we assume that there exist two groups:
Applicant of each group is assigned a sensitive attribute $s \in \{0,1\}$. Let $\lambda_1$ be the fraction of the applicants of $s=1$, and $\lambda_0$ be $1- \lambda_1$. Each of the applicants has an option to exert his or her effort, and before determining whether or not to exert the effort, the applicant is given a cost $c \in [\underline{c},\bar{c}]$ of that. Let $e \in \{q,u\}$ be the variable that indicates the effort of an applicant. The applicant's feature $X \in \mX$ is drawn from a distribution.
The effort is very relevant to the performance of the applicant, and thus the firm would like to admit all the applicants of $e=q$ (that we call the qualified applicants) and to dismiss the applicants of $e=u$ (that we call the unqualified applicants). If a qualified applicant is accepted, the firm earns revenue $v_q>0$. If an unqualified applicant is accepted, the firm loses $-v_u<0$ (= negative revenue). All the applicants prefer to be accepted, and let $\omega$ be the revenue of the applicant to be accepted. The firm uses the pre-trained classifier that estimates the effort $e$ of the applicant from the sensitive attribute $s$ and non-sensitive attributes $X$. 
Following \cite{DBLP:conf/nips/HardtPNS16}, we assume that the classifier is a function $\mX \rightarrow \Real$, where $\theta(X) \in \Real$ indicates how likely the applicant is qualified.
Let $f_{e,s}(\theta)$ and $F_{e,s}(\theta)$ be the density and distribution of $\theta = \theta(X)$ given $e$ and $s$. 
Let $G_s(c)$ be the distribution of the cost $c$ given $s$. 
For the ease of discussion, we assume $G_s(c)$ be a uniform distribution over $[\underline{c}, \bar{c}]$.
Figure \ref{fig_timeline} displays the timing of the interaction between the applicants and the firm.
We pose the following assumption on the signal $\theta$ of the classifier.
\begin{assumption}{\rm Monotone Likelihood Ratio Property (MLRP): }
$\frac{f_{q,s}(\theta) }{f{u,s}(\theta)}$ is strictly increasing in $\theta$ for $s=\{0,1\}$.
\label{asm_mlrp}
\end{assumption}%
Namely, Assumption \ref{asm_mlrp} states that the applicant of a larger $\theta$ is more likely to be qualified. 

In the sequel, we discuss rational behavior of the firm (Section \ref{subsec_firm}) and the applicants (Section \ref{subsec_applicant}). 

\subsection{Firm's behavior}
\label{subsec_firm}

The MLRP (Assumption \ref{asm_mlrp}) motivates the firm to make a threshold of $\theta$ on the hiring decision. 
 A rational firm, without fairness-related restriction, would optimize its revenue, and the optimal threshold of $\theta$ depends on the firm's belief on the fraction of the qualified applicants:
Let $\pi_s$ be the fraction of the qualified applicants given $s$. 

When the firm observes $(\theta,s)$, the probability of this applicant being qualified is
\begin{align*}
\Prob(e=q|\theta,s) = \frac{\pi_s f_{q,s}(\theta)}{\pi_s f_{q,s}(\theta)+(1-\pi_s) f_{u,s}(\theta)}
\end{align*}
The firm accepts this applicant iff $\Prob(e=q|\theta,s)v_q+(1-\Prob(e=q|\theta,s))v_u\geq 0 $.
Given the MLRP assumption, this is equivalent to set a threshold $\tilde{\theta}_s$ such that
\begin{equation}
\frac{v_q}{v_u} = \frac{1-\pi_s}{\pi_s} \frac{f_{u,s}(\tilde{\theta}_s)}{f_{q,s}(\tilde{\theta}_s)} 
\label{eq_rfirm}
\end{equation}
Letting $r = v_q/v_u$ and $\phi_s = f_{u,s}(\tilde{\theta}_s)/f_{q,s}(\tilde{\theta}_s)$, \eqref{eq_rfirm} is equivalent to 
\begin{equation}
\pi_s = \frac{\phi_s(\tilde{\theta}_s)}{r + \phi_s(\tilde{\theta}_s)},
\label{eq_rfirmtwo}
\end{equation}
and the applicants with $\theta >\tilde{\theta}_s$ is approved.

\subsection{Applicants' behavior}

\label{subsec_applicant}
 Let $\tilde{c}_s(\theta) = \omega [ F_{u,s}(\theta) - F_{q,s}(\theta) ]$
be the expected increase of reward by exerting an effort. Given the firm's threshold $\tilde{\theta}_s$,  $\tilde{c}_s(\tilde{\theta}_s)$ is the incentive of the applicant to exert an effort.
A rational applicant invests in skills iff his or her cost $c$ is smaller than $\tilde{c}_s(\tilde{\theta}_s)$ : which implies
\begin{equation}
\label{eq_applicants}
 \pi_s = G(\tilde{c}_s(\tilde{\theta}_s)) := \min\left(1, \frac{\tilde{c}_s(\tilde{\theta}_s) - \underline{c}}{\bar{c}-\underline{c}}\right).
\end{equation}

\subsection{Laissez-faire Equilibria}
\label{subsec_lf}

\begin{figure}
\centering
\includegraphics[width=0.4\textwidth]{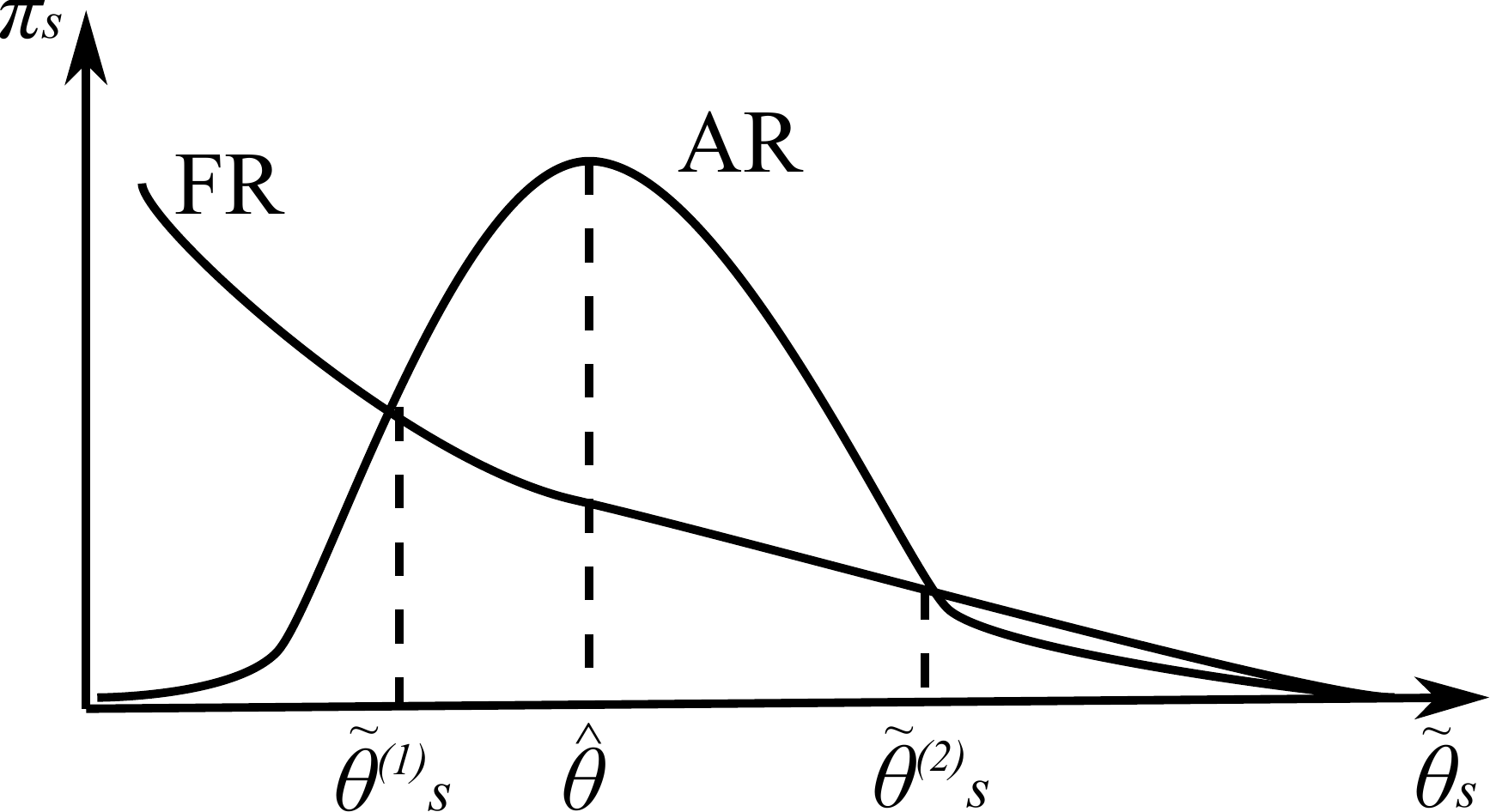}
\caption{Illustration of the equilibrium parameters $(\tilde{\theta}_s, \pi_s)$. Assumption \ref{asm_mlrp} implies that the FR curve is strictly decreasing, and the AR curve is unimodal. The value $\hat{\theta} = 
\left[ \theta : (\rd G(\tilde{c}(\theta)))/(\rd \theta) = 0\right]$ is the mode of the AR curve.}
\label{fig_eewwcurve}
\end{figure}%

Section \ref{subsec_firm} (resp. \ref{subsec_applicant}) introduced the best response of the firm (resp. the applicants) to the belief in response to the action of the applicants (resp. the firm).
When no fairness-related constraint is posed, a firm that fully exploits $s$ (that we call ``Laissez-faire'', LF) will set different threshold $\tilde{\theta}_s$ for each $s$.
If the fraction of the qualified people and the threshold of hiring are exactly the two rate postulated by the beliefs, then the players on both sides cannot increase their revenue by deviating from the current actions: Namely, in the equilibrium $\pi_s = G(\tilde{c}_s(\tilde{\theta}_s))$ holds:
\begin{definition}{\rm (Laissez-Faire Equilibrium \cite{CL93})} 
An equilibrium is a quadraple $(\tilde{\theta}_0, \tilde{\theta}_1, \pi_0, \pi_1)$ satisfying Equality \ref{eq_rfirmtwo} for $s=0,1$ and $\pi_0 = G(\tilde{c}_0(\tilde{\theta}_0)), \pi_1 = G(\tilde{c}_1(\tilde{\theta}_1))$.
\end{definition}
Figure \ref{fig_eewwcurve} illustrates the beliefs $\pi_s$ on equilibria, which is the intersections of the following two curves. Namely, (i) the Firm-Response (FR) curve: $\{(\tilde{\theta}_s,\pi_s): \pi_s = \frac{\phi_s(\tilde{\theta}_s)}{r + \phi_s(\tilde{\theta}_s)} \}$ that indicates the threshold that maximizes the firm's revenue and (ii) the Applicant-Response (AR) curve: $\{(\tilde{\theta}_s,\pi_s): \pi_s = G(\tilde{c}_s(\tilde{\theta}_s))\}$ that indicates the incentive of the applicants. 
The following proposition holds:
\begin{proposition}{\rm (Existence of multiple equilibria, Proposition 1 in Coate and Loury \cite{CL93})}
For each $s$, there exist two or more intersections of the FR and AR curves if and only if there exists $\tilde{\theta}_s$ such that $ G(\tilde{c}_s(\tilde{\theta}_s)) > \frac{\phi_s(\tilde{\theta}_s)}{r + \phi_s(\tilde{\theta}_s)}$.
\end{proposition}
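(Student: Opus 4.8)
The plan is to reduce the statement to a sign analysis of a single scalar function. Writing $A(\theta) := G(\tilde{c}_s(\theta))$ for the height of the AR curve and $B(\theta) := \phi_s(\theta)/(r+\phi_s(\theta))$ for the height of the FR curve, an intersection is exactly a zero of $D(\theta):=A(\theta)-B(\theta)$, and the right-hand condition $G(\tilde{c}_s(\tilde{\theta}_s))>\phi_s(\tilde{\theta}_s)/(r+\phi_s(\tilde{\theta}_s))$ is precisely $\sup_\theta D(\theta)>0$. So I must show that $D$ has two or more zeros if and only if $D$ is strictly positive somewhere.

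First I would record the two shape facts that the MLRP buys us. Since $f_{q,s}/f_{u,s}$ is strictly increasing, $\phi_s=f_{u,s}/f_{q,s}$ is strictly decreasing, and as $t\mapsto t/(r+t)$ is increasing (its derivative is $r/(r+t)^2>0$), the composite $B$ is strictly decreasing, running from near $1$ at the left end (where $\phi_s\to\infty$) down toward $0$. For the AR curve, $\tilde{c}_s'(\theta)=\omega[f_{u,s}(\theta)-f_{q,s}(\theta)]$ changes sign exactly once, from $+$ to $-$, at the point $\hat{\theta}$ where $\phi_s(\hat{\theta})=1$; composing with the nondecreasing $G$ shows $A$ is single-peaked with mode $\hat{\theta}$. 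Assuming $\underline{c}>0$ so that $G(0)=0$, and using $\tilde{c}_s\to0$ at both ends, we get $A\to0$ at both extremes, hence $D<0$ at the left end (where $B\to1$) and $D<0$ near the right end (where $A=0<B$); the trivial no-investment equilibrium sits at the right corner $(\theta\to\infty,\pi\to0)$.

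The easy direction is $\sup_\theta D>0\Rightarrow{}$ two or more intersections: if $D(\theta^*)>0$ for some $\theta^*$, then since $D<0$ at both ends the intermediate value theorem yields a zero in $(\text{left end},\theta^*)$ and a second one in $(\theta^*,\text{right end})$, so at least two intersections. For the converse I would argue by contraposition: suppose $A(\theta)\le B(\theta)$ for all $\theta$, i.e.\ $D\le0$, and show there is at most one intersection. Here I would use that $D$ is strictly increasing on the branch $(-\infty,\hat{\theta}]$, being the sum of the nondecreasing $A$ and the strictly increasing $-B$; combined with $D\le0$ this forces $D<0$ on the whole increasing branch, so every zero must lie on the decreasing branch $[\hat{\theta},\infty)$, and in fact on the sub-interval $[\hat{\theta},\theta_r)$ on which $\tilde{c}_s>\underline{c}$ (beyond $\theta_r$ one has $A=0<B$). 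It then remains to show the explicit equation $A(\theta)=B(\theta)$ has a unique root there.

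I expect this last step to be the main obstacle. On $[\hat{\theta},\theta_r)$ both $A$ and $B$ are decreasing, and two co-monotone curves can a priori meet several times, so monotonicity alone is not enough; ruling out multiple tangential contacts requires a genuine single-crossing property of $D$ on the decreasing branch. The natural way to obtain it is to push the monotone-likelihood-ratio structure one step further (an effective log-concavity of the relevant density ratio), which is exactly the regularity underlying Proposition~1 of Coate and Loury; the knife-edge case in which the AR curve is tangent to the FR curve without strictly crossing it is then the boundary excluded by the strict inequality in the hypothesis.
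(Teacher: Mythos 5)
Your proposal follows exactly the route the paper intends, and in fact goes well beyond it in rigor: the paper's entire proof is the single sentence ``The proof directly follows from the monotonicity of FR and the unimodality of AR.'' Everything you do in your first two steps --- deriving that $B(\theta)=\phi_s(\theta)/(r+\phi_s(\theta))$ is strictly decreasing from MLRP, that $A(\theta)=G(\tilde{c}_s(\theta))$ is single-peaked with mode where $\phi_s=1$, pinning down the boundary behavior, and then running the intermediate value theorem to get two crossings from one point with $A(\theta^*)>B(\theta^*)$ --- is a correct fleshing-out of facts the paper only asserts (the shape claims appear without proof in the caption of Figure~\ref{fig_eewwcurve}). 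Your sufficiency direction is complete and is the entire substantive content behind the paper's one-liner.

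The obstacle you identify in the necessity direction is genuine, but you should know it is not resolved by the paper either: there is nothing behind the quoted sentence that handles it. Monotonicity of FR plus unimodality of AR truly do not exclude the configuration you describe --- the AR curve tangent to the FR curve from below at two points of the common decreasing branch, giving two intersections while $G(\tilde{c}_s(\theta))>\phi_s(\theta)/(r+\phi_s(\theta))$ holds nowhere. Moreover, your hope that ``pushing MLRP one step further'' (log-concavity) rules this out will not pan out: one can reverse-engineer densities realizing a double tangency. Prescribe any strictly decreasing curve $B$ and any unimodal $A\le B$ doubly tangent to it on the decreasing branch, with the mode of $A$ placed where $B$ crosses the level $1/(1+r)$; then set $\phi_s=rB/(1-B)$ (decreasing, so MLRP holds) and recover $f_{q,s}$ proportional to $A'/(\phi_s-1)$, which is positive on both branches, and $f_{u,s}=\phi_s f_{q,s}$. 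So the ``only if'' direction is simply false for such knife-edge configurations; the proposition --- here and in Coate and Loury --- is implicitly read generically, i.e., with intersections understood as transversal crossings, which is exactly what your IVT argument produces. The honest fix is to state that exclusion of tangencies as an explicit assumption rather than to appeal to a regularity property that does not follow from the model. In short: your attempt is at least as complete as the paper's own proof, and the step you could not finish is a gap in the statement as written, not in your argument relative to the paper's.
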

The proof directly follows from the monotonicity of FR and the unimodality of AR.
As discussed by Coate and Loury \cite{CL93}, the existence of multiple intersections implies the existence of asymmetric equilibria where $\pi_0 < \pi_1$, even in the case signal is not biased (i.e., $F_{e, s=0}(\theta) = F_{e,s=1}(\theta)$). Such an asymmetric equilibrium discourages the unfavored group $s=0$ as $\theta_0 < \theta_1$ implies the reduced incentive of the unfavored group. 

\subsection{Social Welfare}
\label{subsec_sw}

In accordance with Sections \ref{subsec_firm} and \ref{subsec_applicant}, we define the social welfare as follows:
The firm's welfare is
\begin{align*}
\FW_s = \FW_s(\theta, \pi) = \left( \pi_s (1-F_{q,s}(\theta)) v_q - (1-\pi_s) (1-F_{u,s}(\theta)) v_u \right),
\end{align*}
whereas the applicants' welfare is
\begin{align*}
\AW_s = \AW_s (\theta, \pi) = \omega \Bigl( \pi \left( 1-F_{q,s}(\theta) \right) + (1 - \pi) \left( 1-F_{u,s}(\theta) \right) \Bigr) + \int_{\underline{c}}^{(1 - \pi) \underline{c} + \pi \bar{c}} c \rd c. 
\end{align*}
The social welfare is the sum of the two quantities above summed over the groups: Let $\SW_s = \FW_s + \AW_s$. The quantity $\SW = \sum_{s} \lambda_s \SW_s(\tilde{\theta}_s, \pi_s)$ is the social welfare per applicant.

\begin{theorem}{\rm (Equilibrium of the maximum social welfare)}
Fix $s \in \{0,1\}$. For group $s$, let there be two equilibria $(\tilde{\theta}_s^{(1)}, \pi_s^{(1)})$, $(\tilde{\theta}_s^{(2)},\pi_s^{(2)})$ such that $\pi_s^{(1)} > \pi_s^{(2)}$. Let $\SW_s^{(1)}, \SW_s^{(2)}$ the corresponding social welfares. Then, $\SW_s^{(1)} > \SW_s^{(2)}$.
\label{thm_sworder}
\end{theorem}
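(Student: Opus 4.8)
The plan is to exploit the fact that both equilibria lie on the Firm-Response curve and to show that the social welfare, viewed as a function of a single parameter running along that curve, is strictly monotone. Fix $s$ and suppress the subscript. Since both $(\tilde\theta^{(1)},\pi^{(1)})$ and $(\tilde\theta^{(2)},\pi^{(2)})$ satisfy \eqref{eq_rfirmtwo}, I would write $\pi = \pi_{FR}(\theta) := \phi(\theta)/(r+\phi(\theta))$ with $\phi(\theta) = f_u(\theta)/f_q(\theta)$. The MLRP (Assumption \ref{asm_mlrp}) makes $\phi$ strictly decreasing, hence $\pi_{FR}$ strictly decreasing, so $\pi^{(1)} > \pi^{(2)}$ forces $\tilde\theta^{(1)} < \tilde\theta^{(2)}$. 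It therefore suffices to show that $g(\theta) := \SW(\theta, \pi_{FR}(\theta))$ is strictly decreasing on $[\tilde\theta^{(1)}, \tilde\theta^{(2)}]$, for then $g(\tilde\theta^{(1)}) > g(\tilde\theta^{(2)})$, which is exactly $\SW^{(1)} > \SW^{(2)}$.

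First I would differentiate $g$ by the chain rule,
\[ g'(\theta) = \pdif{\SW}{\theta} + \pdif{\SW}{\pi}\,\pi_{FR}'(\theta), \]
and observe that the key simplification comes from the firm's optimality. Indeed $\pdif{\FW}{\theta} = -\pi f_q(\theta) v_q + (1-\pi) f_u(\theta) v_u$, and setting this to zero reproduces precisely the FR relation \eqref{eq_rfirmtwo}; hence $\pdif{\FW}{\theta}$ vanishes identically along the FR curve (an envelope effect, since the threshold is the firm's best response to its belief). Consequently $\pdif{\SW}{\theta}$ on the curve reduces to the applicants' term $\pdif{\AW}{\theta} = -\omega\big(\pi f_q(\theta) + (1-\pi) f_u(\theta)\big) < 0$.

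It then remains to sign the second summand. Since $\pi_{FR}'(\theta) < 0$, it is enough to verify $\pdif{\SW}{\pi} \ge 0$. Here $\pdif{\FW}{\pi} = (1-F_q(\theta))v_q + (1-F_u(\theta))v_u > 0$, and for the applicants $\pdif{\AW}{\pi} = \omega\big(F_u(\theta) - F_q(\theta)\big) + \big((1-\pi)\underline{c} + \pi\bar{c}\big)(\bar{c} - \underline{c})$. Both summands are nonnegative once one notes that the MLRP implies the qualified signal distribution first-order stochastically dominates the unqualified one, i.e. $F_q(\theta) \le F_u(\theta)$ for every $\theta$; this is the small lemma I would establish first, by integrating the increasing likelihood ratio. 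Hence $\pdif{\SW}{\pi} > 0$, both terms of $g'(\theta)$ are negative, and $g' < 0$ throughout, which closes the argument.

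The main obstacle I anticipate is not the envelope step, which is clean, but the careful signing of $\pdif{\AW}{\pi}$: one must fix the sign convention of the effort-cost integral and invoke the MLRP-to-dominance implication $F_q \le F_u$ to guarantee that the incentive term $\omega(F_u - F_q)$ is nonnegative. A secondary point worth checking is that no boundary effect intervenes: along the FR curve $\pi_{FR}(\theta) = \phi(\theta)/(r+\phi(\theta)) \in (0,1)$ strictly, so the cap in $G(\cdot) = \min(1,\cdot)$ is never active between the two equilibria and $g$ is smooth there.
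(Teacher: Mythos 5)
Your route is genuinely different from the paper's. The paper never differentiates anything: it observes that an equilibrium is a pair of \emph{mutual best responses}, so that $\SW_s^{(i)} = \max_\theta \FW_s(\theta,\pi_s^{(i)}) + \max_\pi \AW_s(\tilde{\theta}_s^{(i)},\pi)$ (its equation \eqref{ineq_sw_sup}), and then bounds the difference of the two maxima termwise, using that $\FW_s(\theta,\pi)$ is increasing in $\pi$ and that $\max_\pi \AW_s(\theta,\pi)$ --- the aggregate of each applicant's better option between investing and not --- is decreasing in $\theta$. You instead slide along the FR curve and sign the total derivative via the envelope identity $\partial \FW_s/\partial\theta = 0$ on that curve. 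Your envelope step, the monotonicity $\pi_{FR}'<0$, the resulting ordering $\tilde{\theta}^{(1)} < \tilde{\theta}^{(2)}$, and the MLRP-implies-dominance lemma $F_{q,s}\le F_{u,s}$ are all correct, and in principle this differential argument could substitute for the paper's discrete one.

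The problem is the signing of $\partial \AW_s/\partial\pi$, and it traces back to the sign of the cost integral. You took the paper's displayed formula for $\AW_s$ literally, where $\int_{\underline{c}}^{(1-\pi)\underline{c}+\pi\bar{c}} c \,\rd c$ is \emph{added}; with that convention your computation indeed gives $\partial\AW_s/\partial\pi \ge 0$ and your proof closes. But that convention cannot be the intended one: effort costs must reduce applicant welfare, and --- decisively --- the paper's own proof requires the equilibrium value $\pi^{(i)} = G(\tilde{c}_s(\tilde{\theta}^{(i)}))$ to be the \emph{maximizer} of $\AW_s(\tilde{\theta}^{(i)},\cdot)$, which holds only when the cost term is subtracted (and suitably normalized), so that the first-order condition in $\pi$ reproduces \eqref{eq_applicants}; with the ``$+$'' sign, $\AW_s$ is increasing in $\pi$ and the maximum would sit at $\pi=1$, not at the equilibrium. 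Under the corrected sign one gets $\partial\AW_s/\partial\pi = \omega\bigl(F_{u,s}(\theta)-F_{q,s}(\theta)\bigr) - G^{-1}(\pi)$, which is negative exactly where the FR curve lies above the AR curve (i.e.\ where $\pi_{FR}(\theta) > G(\tilde{c}_s(\theta))$); there $\partial\SW_s/\partial\pi \ge 0$ is no longer automatic --- it would need the firm's term $(1-F_{q,s})v_q + (1-F_{u,s})v_u$ to dominate, which no assumption guarantees --- and the second summand of $g'$ can turn positive, so your monotonicity-along-FR argument breaks for pairs of equilibria separated by such a region. The paper's decomposition is immune to this: it never needs $\SW_s$ monotone in $\pi$, only that $\max_\pi \AW_s(\theta,\pi)$ is decreasing in $\theta$, which holds because it is an integral over $c$ of $\max\bigl(\omega(1-F_{q,s}(\theta)) - c,\ \omega(1-F_{u,s}(\theta))\bigr)$, both options decreasing in $\theta$. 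A secondary, more minor point: your argument also needs the densities (hence $\phi$ and $\pi_{FR}$) to be differentiable, which the paper never assumes; its argument is derivative-free.
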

\begin{proof}
Note that, the fact that $(\tilde{\theta}_s^{(i)},\pi_s^{(i)})$ for $i \in \{1,2\}$ are at equilibrium implies that
\begin{equation} 
\SW_s^{(i)} := \FW_s(\tilde{\theta}_s^{(i)},\pi_s^{(i)}) + \AW_s(\tilde{\theta}_s^{(i)},\pi_s^{(i)}) = \max_{\theta} \FW_s(\theta, \pi_s^{(i)}) + \max_\pi \AW_s(\tilde{\theta}_s^{(i)}, \pi).
\label{ineq_sw_sup}
\end{equation}
, and thus
\begin{multline}
\SW_s^{(1)} - \SW_s^{(2)} \ge \min_\theta \left( (\FW_s(\theta, \pi_s^{(1)}) - \FW_s(\theta, \pi_s^{(2)})) \right) \\ 
+ \max_\pi \AW_s(\tilde{\theta}_s^{(1)}, \pi) - \max_\pi \AW_s(\tilde{\theta}_s^{(2)}, \pi).
\end{multline}
The term $\min_\theta \left( (\FW_s(\theta, \pi_s^{(1)}) - \FW_s(\theta, \pi_s^{(2)})) \right)$ is positive because $\FW_s(\theta, \pi)$ is strictly increasing in $\pi$. 
On the other hand, the monotonicity of the FR curve and $\pi_s^{(1)} > \pi_s^{(2)}$ imply $\tilde{\theta}_s^{(1)} < \tilde{\theta}_s^{(2)}$. The second term $\max_\pi \AW_s(\tilde{\theta}_s^{(1)}, \pi) - \max_\pi \AW_s(\tilde{\theta}_s^{(2)}, \pi)$ is non-negative because $\max_\pi \AW_s(\theta, \pi)$, which is the function of $\theta$, is decreasing it is a integration over applicants and each applicant takes maximum over (i) pay its cost $c$ to get reward $\omega (1 - F_{q,s}(\theta))$ or (ii) get reward $\omega (1 - F_{u,s}(\theta))$, and both of the two options have decreasing reward in $\theta$.
\end{proof}
Theorem \ref{thm_sworder} states that the equilibria are ordered by $\pi$: This matches our conception on the application process. The more effort the applicants pay, the more applicants the firm accepts, and the better the equilibrium is.

\section{Fairness Criteria and Their Results}
\label{sec_fairpolicy}

Section \ref{subsec_lf} shows that a lack of fairness constraint discourages the individuals of the unfavored group under an asymmetric equilibrium. A natural question is that, whether or not we can impose some non-distriminatory constraint on the firm's decision-making to remove such asymmetric equilibria.
This section compares several constraints that are discussed in the literature.

The first constraint is the one that adopts the same threshold to the two groups:
\begin{definition}{\rm (Color blind (CB) policy)}
The firm decision is said to be color-blind iff 
$\tilde{\theta}_0 = \tilde{\theta}_1$. 
The equilibria under CB are characterized by a set of quadraples $(\tilde{\theta}_0, \tilde{\theta}_1, \pi_0, \pi_1)$ that satisfies following constraints: (i) Equality \eqref{eq_applicants} holds for $s=0,1$. (ii) Moreover, letting $\tilde{\theta} := \tilde{\theta}_0 = \tilde{\theta}_1$, the following holds: 
\[
(\lambda_0 \pi_0 + \lambda_1 \pi_1) = \frac{\phi(\tilde{\theta})}{r + \phi(\tilde{\theta})},
\]
where $\phi(\theta) = \frac{\lambda_0 f_{u,s=0}(\theta) + \lambda_1 f_{u,s=1}(\theta)}{\lambda_0 f_{q,s=0}(\theta) + \lambda_1 f_{q,s=1}(\theta)} $.
\end{definition}
In other words, under CB the firm considers an optimization of single $\tilde{\theta}$ over a single group that mixed the two groups of $s=0,1$. 
Contrary to the argument of \cite{CL93} (as discussed in Section \ref{sec_intro}),
CB potentially yields an unfair treatment between two groups when $F_{e,s}(\theta)$ varies largely among two groups $s=0,1$:
\begin{proposition}
There exists an equilibrium with $\pi_0 \ne \pi_1$ under CB.
\label{prop_cbdisp}
\end{proposition}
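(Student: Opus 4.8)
The plan is to treat the two color-blind equilibrium conditions as a single scalar problem in the common threshold $\tilde{\theta}$ and then exhibit a configuration of the group-conditional signal distributions at which the induced qualification rates of the two groups must differ. First I would use the applicant-response condition \eqref{eq_applicants}, which under CB reads $\pi_s = G(\tilde{c}_s(\tilde{\theta}))$ with $\tilde{c}_s(\tilde{\theta}) = \omega[F_{u,s}(\tilde{\theta}) - F_{q,s}(\tilde{\theta})]$, to express both $\pi_0$ and $\pi_1$ as explicit continuous functions $\pi_s(\tilde{\theta})$ of the single threshold. Substituting these into the firm's aggregate condition (ii) collapses the whole system into one equation
\[
H(\tilde{\theta}) := \lambda_0 \pi_0(\tilde{\theta}) + \lambda_1 \pi_1(\tilde{\theta}) - \frac{\phi(\tilde{\theta})}{r + \phi(\tilde{\theta})} = 0,
\]
whose roots are exactly the common thresholds supporting a CB equilibrium (with $\pi_s = \pi_s(\tilde{\theta})$).

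Second, I would establish that $H$ has at least one root. Both $\pi_s(\tilde{\theta})$ are continuous and valued in $[0,1]$, so the aggregate applicant term $\lambda_0\pi_0+\lambda_1\pi_1$ is a continuous $[0,1]$-valued function of $\tilde{\theta}$; meanwhile the aggregate firm term $\phi(\tilde{\theta})/(r+\phi(\tilde{\theta}))$ is the mixed analogue of the FR curve and inherits the monotone boundary behavior used in the laissez-faire analysis of Section \ref{subsec_lf}. Comparing the two sides at the extremes of the threshold range and invoking the intermediate value theorem yields a root $\tilde{\theta}^\ast$, exactly as a mixed AR curve must cross a mixed FR curve. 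This step is essentially the CB counterpart of the existence argument behind the laissez-faire equilibrium and carries over with the group-conditional densities replaced by their $\lambda$-weighted mixtures.

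Finally, to force disparity I would pick the distributions so that the two groups respond differently to the \emph{same} threshold. Concretely, take $F_{e,1}$ to be a shifted or more informative version of $F_{e,0}$ --- e.g. $F_{e,1}(\theta)=F_{e,0}(\theta-\delta)$ --- chosen so that at the equilibrium threshold $\tilde{\theta}^\ast$ the incentive gaps satisfy $\tilde{c}_0(\tilde{\theta}^\ast)\neq \tilde{c}_1(\tilde{\theta}^\ast)$ while both values lie strictly inside $[\underline{c},\bar{c}]$. On that interior region $G$ is strictly increasing, so $\pi_0(\tilde{\theta}^\ast)=G(\tilde{c}_0(\tilde{\theta}^\ast))\neq G(\tilde{c}_1(\tilde{\theta}^\ast))=\pi_1(\tilde{\theta}^\ast)$, giving the claimed equilibrium with $\pi_0\neq\pi_1$. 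The main obstacle is exactly this last bookkeeping: the clipping in $G=\min(1,\cdot)$ means I must keep both incentives away from the saturated regimes (otherwise $\pi_0=\pi_1=1$ or $\pi_0=\pi_1=0$ could wipe out the gap), so the construction has to certify both that a root $\tilde{\theta}^\ast$ exists and that the two incentives evaluated there are distinct and interior --- a condition I would guarantee by taking the between-group discrepancy $\delta$ small enough that $\tilde{\theta}^\ast$ stays in the unimodal interior of the AR response, yet nonzero so the incentives genuinely separate.
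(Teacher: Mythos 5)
Your proposal is correct in substance and follows the same basic strategy as the paper. Since the claim is existential, the paper ``proves'' it by exhibiting group-conditional signal distributions under which the common CB threshold induces different investment incentives $\tilde{c}_s(\tilde{\theta})$, hence different $\pi_s = G(\tilde{c}_s(\tilde{\theta}))$: its Example 1 is literally a shift construction ($X_{s=1} \sim \Normal(\Ind(e=q)+10,1)$ with $\lambda_0 \approx 1$), where at the essentially majority-determined equilibrium threshold the minority's incentive collapses to zero, giving $\pi_1 \approx 0 \neq \pi_0$; its Example 2 gets the same effect via group-dependent signal accuracy. Your construction $F_{e,1}(\theta) = F_{e,0}(\theta - \delta)$ with small $\delta$ is the perturbative version of the paper's large-shift example, and your added scaffolding --- the reduction to a single scalar equation $H(\tilde{\theta}) = 0$, the persistence of the root under the perturbation, and the explicit care that both incentives stay in the strictly increasing interior of $G$ so the clipping does not erase the gap --- is more rigorous than the paper's verbal limiting argument. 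The mechanism being exploited (group-dependent distributions make the incentive at a common threshold group-dependent) is identical.

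One imprecision worth fixing: the intermediate value theorem applied ``at the extremes of the threshold range'' does not by itself yield a root of $H$. At both extremes the aggregate applicant term tends to $G(0)=0$, while the mixture FR term $\phi(\tilde{\theta})/(r+\phi(\tilde{\theta}))$ tends to $1$ at the low end and $0$ at the high end, so $H$ runs from $-1$ to $0$ with no forced sign change; the only equilibrium guaranteed unconditionally is the trivial one with $\pi_0 = \pi_1 = 0$, which exhibits no disparity. A nontrivial root requires the Coate--Loury condition of the paper's Proposition 1, namely that the (mixture) AR curve exceed the FR curve somewhere. You must impose this on the $\delta = 0$ base model --- exactly as the paper does when it assumes parameters admitting multiple equilibria in its Example 1. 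Once the base model has a transversal crossing at which the AR curve is strictly increasing, your persistence argument and the strict monotonicity of $G$ on the interior complete the proof; so this is a patchable omission of a hypothesis rather than a failure of the approach.
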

In the following, we show examples of the disparity in Proposition \ref{prop_cbdisp}.
Let $\Normal(\mu, \sigma^2)$ be a normal distribution with mean $\mu$ and variance $\sigma^2$.
Let $\Ind(A)$ be $1$ if $A$ holds or $0$ otherwise, 

\begin{example}{\rm (Insufficient identification)}
Let $d=1$ and 
\begin{align}
X_{s=0} &\sim \Normal(\Ind(e=q), 1)\nn
X_{s=1} &\sim \Normal(\Ind(e=q)+10, 1)
\end{align}
and $\lambda_0 \approx 1$. As the classifier cannot consider $s$ explicitly, it utilizes the only dimension as $\theta = X$. 
Assume that $v_u, v_q$, and $\omega$ are such that there exists more than two equilibria as shown in Figure \ref{fig_eewwcurve} for group $s=0$. Remember that the equilibria under CB is determined by the interaction between the firm and a mixture of two groups $s=0,1$.
As the population of $s=1$ approaches $0$, one can show that the $\tilde{\theta}$ of any equilibrium is arbitrarily close to the one of the equilibria for the majority $s=0$, which has some capability of identifying $e=u$ or $q$ of person in $s=0$, and thus $\tilde{\theta}$ is not very far from $0.5$.
In this case, most people of $s=1$ would be assigned to $\hat{y}=1$ regardless of their efforts (which discourages them), and thus $\pi_1$ is close to $0$ whereas $\pi_0$ is not.
\end{example}
Another example is the case where predictive power of $\theta$ largely differs between two 
\begin{example}{\rm (signal of different accuracy)}
Let $X \in \Real^2$ and $\textbf{b}_0, \textbf{b}_1$ be the orthogonal bases of $X$.
\begin{align}
X|s=0 &\sim \Normal(\Ind(e=q)-0.5, 1)\ \textbf{b}_0  \nn
X|s=1 &\sim \Normal(\Ind(e=q)-0.5, 10^2)\ \textbf{b}_1.
\end{align}
In this case, a linear classifier can utilize a linear combination of the two basis to create a signal $\theta$: The first (resp. the second) basis is for identifying the effort of people in $s=0$ (resp. $s=1$). For any threshold value of $\theta$, such a signal is of very different incentive $\tilde{c}_s(\theta)$ between groups $s=0,1$. Due to the noisy signal, $\theta$ gives very little information on whether a person of $s=1$ exert an effort or not. When an equilibrium exists, the very little of $s=1$ would exert an effort, whereas a certain portion of $s=0$ would be incentivized to exert an effort.
\end{example}%
The implication of the examples above is as follows: When the signal $\theta$ treats the two group differently, as is shown in the case of credit risk prediction \cite{DBLP:conf/nips/HardtPNS16} (Figure 4 therein), the accuracy of a classifier can vary among $s$, which will make a mere application of CB fail.

We next consider the constraint of the demographic parity, which is arguably the most common notion of fairness in the context of fairness-aware machine learning.
\begin{definition}{\rm (demographic parity, DP)} 
The firm decision is said satisfy demographic parity iff
$\Prob[\theta > \tilde{\theta}_0| s= 0] = \Prob[\theta > \tilde{\theta}_1| s = 1]$. The equilibria under DP are characterized by a set of quadraples $(\tilde{\theta}_0, \tilde{\theta}_1, \pi_0, \pi_1)$ that satisfies following constraints: (i) Equality \eqref{eq_applicants} holds for $s=0,1$. (ii) Moreover, letting $\tilde{\theta} := \tilde{\theta}_0 = \tilde{\theta}_1$, the following holds:
\begin{align*}
(\tilde{\theta}_0, \tilde{\theta}_1) =  \max_{(\theta_0, \theta_1)}\ & \FW(\tilde{\theta}_0, \tilde{\theta}_1, \pi_0, \pi_1), \nn
\textrm{s.t.}\ &\pi_0 (1-F_{q,s=0}) + (1- \pi_0) (1-F_{u,s=0}) \nn & =
  \pi_1 (1-F_{q,s=1}) + (1- \pi_1) (1-F_{u,s=1}).
\end{align*}
\end{definition}
In other words, it equalizes the ratio of the people accepted among $s=0,1$.
However, as discussed in Coate and Loury \cite{CL93}, such a constraint does not remove disparity:
\begin{proposition}
There exists an equilibrium with $\pi_0 \ne \pi_1$ under the demographic parity.
\label{prop_ineqaa}
\end{proposition}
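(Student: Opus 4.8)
The plan is to prove existence by exhibiting an explicit instance with \emph{symmetric} primitives in which disparity nonetheless arises endogenously, mirroring the Coate--Loury affirmative-action construction adapted to the present learning-from-signal model. First I would take identical signal distributions $F_{e,s=0}=F_{e,s=1}=F_e$ and $\lambda_0=\lambda_1=1/2$, and fix $r=v_q/v_u$, $\omega$, and $[\underline{c},\bar{c}]$ so that the multiplicity condition of Coate and Loury (Proposition~1) holds, i.e.\ there is some $\theta$ with $G(\tilde{c}(\theta))>\frac{\phi(\theta)}{r+\phi(\theta)}$. Because the primitives are symmetric across $s$, the symmetric profile $\pi_0=\pi_1$ is consistent with equal thresholds $\tilde{\theta}_0=\tilde{\theta}_1$: the demographic-parity constraint is then slack and the firm's problem separates into two copies of the laissez-faire problem, so each laissez-faire equilibrium induces a diagonal DP equilibrium. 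The goal is thus to produce a \emph{second}, off-diagonal fixed point with $\pi_0\neq\pi_1$.

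Next I would reduce the DP equilibrium conditions to a self-map on $(\pi_0,\pi_1)\in[0,1]^2$. Writing the group acceptance rate as
\begin{equation*}
A_s(\theta,\pi)=\pi\bigl(1-F_{q,s}(\theta)\bigr)+(1-\pi)\bigl(1-F_{u,s}(\theta)\bigr)=\bigl(1-F_{u,s}(\theta)\bigr)+\pi\bigl(F_{u,s}(\theta)-F_{q,s}(\theta)\bigr),
\end{equation*}
the DP constraint fixes a common acceptance level $a=A_0(\tilde{\theta}_0,\pi_0)=A_1(\tilde{\theta}_1,\pi_1)$, so given beliefs $(\pi_0,\pi_1)$ the firm's constrained maximization of $\lambda_0\FW_0(\tilde{\theta}_0,\pi_0)+\lambda_1\FW_1(\tilde{\theta}_1,\pi_1)$ collapses to a one-dimensional choice of $a$, which in turn pins down $\tilde{\theta}_s=\tilde{\theta}_s(a,\pi_s)$ through $A_s(\tilde{\theta}_s,\pi_s)=a$ (well defined since $A_s$ is strictly decreasing in $\theta$). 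Substituting into the applicant best response \eqref{eq_applicants}, $\pi_s=G(\tilde{c}_s(\tilde{\theta}_s))$, yields a continuous self-map $\Psi$ on the compact convex square $[0,1]^2$ whose fixed points are exactly the DP equilibria.

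The mechanism that breaks symmetry is the following. Suppose we posit $\pi_0<\pi_1$. Because $A_s$ is increasing in $\pi$ (by Assumption~\ref{asm_mlrp}, $F_{q,s}\le F_{u,s}$) and strictly decreasing in $\theta$, equalizing acceptance across a low-$\pi$ and a high-$\pi$ group forces $\tilde{\theta}_0<\tilde{\theta}_1$: the firm must \emph{lower} the standard of the disadvantaged group to meet the parity quota. By Assumption~\ref{asm_mlrp} the incentive $\tilde{c}_s(\theta)=\omega[F_{u,s}(\theta)-F_{q,s}(\theta)]$ is unimodal with mode $\hat{\theta}$, so once $\tilde{\theta}_0$ is pushed below $\hat{\theta}$ onto the increasing branch, the lowered standard strictly reduces group $0$'s incentive $\tilde{c}_0(\tilde{\theta}_0)$ and hence, through $\pi_0=G(\tilde{c}_0(\tilde{\theta}_0))$, sustains the posited $\pi_0<\pi_1$. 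I would then close the argument either (i) by pinning down parameters so that the two laissez-faire intersections supplied by Proposition~1 persist under the coupling, placing group $1$ near the high intersection and group $0$ near the low one, or (ii) by a degree/continuation argument: the multiplicity condition renders the diagonal fixed point unstable in the antisymmetric direction, while Brouwer's theorem applied to $\Psi$ on $[0,1]^2$ guarantees a fixed point, and tracking the antisymmetric coordinate $\pi_1-\pi_0$ forces an additional fixed point off the diagonal.

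The main obstacle is the coupling introduced by the parity constraint: unlike the laissez-faire case, the two thresholds are \emph{not} chosen independently, so I cannot simply import an asymmetric laissez-faire equilibrium, and I must control how the firm's constrained optimum splits the common acceptance rate $a$ between the groups and verify that the induced $\tilde{\theta}_0$ indeed lands on the increasing branch of $\tilde{c}_0$. I expect to handle this by localizing the construction near a symmetric configuration that already exhibits laissez-faire multiplicity, so that the shadow price of the parity constraint stays close to the laissez-faire firm-response condition \eqref{eq_rfirmtwo}, the induced thresholds stay $C^0$-close to their laissez-faire counterparts, and the pre-existing multiplicity does the work of pushing the fixed point off the diagonal.
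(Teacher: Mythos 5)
Your proposal captures the right self-reinforcing mechanism (parity forces the firm to lower the threshold for the group believed less qualified, which on the increasing branch of $\tilde{c}_s$ depresses that group's incentive and confirms the belief), and this is indeed the ``patronizing'' logic the paper invokes. But as a proof of existence it has two genuine gaps. First, the step that actually produces an off-diagonal fixed point is never carried out. Brouwer applied to your map $\Psi$ on $[0,1]^2$ only yields \emph{some} fixed point, and the diagonal (symmetric) equilibria already account for that; the claim that instability of the diagonal fixed point in the antisymmetric direction ``forces an additional fixed point off the diagonal'' is not a theorem without an index/degree computation, which you do not supply. Your alternative route (i) fails for a structural reason: the two laissez-faire intersections given by Proposition 1 have very different acceptance rates (high $\pi$ with low $\tilde{\theta}$ versus low $\pi$ with high $\tilde{\theta}$), so a configuration placing the two groups near different laissez-faire intersections violates the parity constraint by a non-vanishing amount. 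The shadow price of the constraint is therefore not small, and no localization or perturbation argument around laissez-faire equilibria can close the gap. Second, your choice $\lambda_0=\lambda_1=1/2$ works \emph{against} the construction: both the paper and the explicit example it cites (Section B of Coate and Loury \cite{CL93}) rely on the minority share $\lambda_1$ being small, so that admitting a few unqualified minority applicants is cheaper for the firm than dismissing many qualified majority applicants --- this is precisely the paper's stated condition (i) for an asymmetric equilibrium, together with (ii) an accurate classifier. With equal shares the constrained optimum splits the distortion symmetrically between the groups, and it is unclear, and unverified in your write-up, that the posited beliefs $\pi_0<\pi_1$ are actually confirmed.

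For calibration: the paper itself does not give a self-contained analytic proof either; it defers to the explicit discrete-$\theta$ construction of Coate and Loury and to the equilibria computed empirically in Section \ref{sec_simulation}. So the most economical repair of your argument is not the degree-theoretic machinery but to instantiate that known construction inside this model: take $\lambda_1$ small and the signal accurate (large $F_{u,s}(\theta)-F_{q,s}(\theta)$), write down the firm's constrained first-order conditions with the parity multiplier, and verify the two applicant best-response equations directly at an asymmetric candidate. Your reduction of the DP equilibrium to a common acceptance level $a$ and the map $\Psi$ is a clean formalization and worth keeping; it is the existence step, not the setup, that is missing.
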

The formal construction of explicit example was shown in Coate and Loury \cite{CL93} (Section B therein). Although they show some example where $\theta$ is discrete, it is not very difficult to empirically confirm that standard classifier can yields equilibria of $\pi_0 \ne \pi_1$ as we empirically show in Section \ref{sec_simulation}.
At a word, an asymmetric equilibrium exists when (i) the ratio of minority $\lambda_1$ is  small, and (ii) the classifier is very accurate (i.e., $F_{u,s}(\theta)-F_{q,s}(\theta)$ is large). 
In such a case, the firm ``patronizes'' the minority of not exerting efforts (i.e., small $\pi_1$) because it is relatively cheaper to admit a small fraction of the unqualified minorities than dismissing many qualified majority applicants. The equilibrium is discouraging minorities as they have a little motivation for investing themselves when they know they are accepted regardless of their efforts.

Recent work \cite{DBLP:conf/nips/HardtPNS16,DBLP:conf/www/ZafarVGG17} proposed alternative criteria of fairness called equalized opportunity and equalized odds. 
Let $\FP_s(\tilde{\theta}) = \Prob[\theta > \tilde{\theta}_s | s, e = 0 ]$ and $\TP_s(\tilde{\theta}) = \Prob[\theta > \tilde{\theta}_s | s, e = 1 ]$
be the false positive (FP) and the true positive (TP) rate of the classifier, respectively. 
The equalized odds criterion requires $\theta$ to have the same Receiver Operating Characteristic (ROC) curve (i.e., a curve comprised of (FP, TP)) for both groups. 
When the data is biased, $\theta$ does not satisfy the equalize odds criterion \cite{sweeney2013,misc:219,pmlr-v81-buolamwini18a}. In our simulation in Section \ref{sec_simulation}, the classifier trained with a U.S. national survey dataset is biased towards the majority (Figure \ref{fig_wwee} (a)).
To address this issue, Hardt et al.\,\cite{DBLP:conf/nips/HardtPNS16} proposed a post-processing that derives another classifier $\theta'$ from the original signal $\theta$. 
The following theorem states the feasible region of FP and TP rates of the derived predictor.
\begin{theorem}{\rm (feasible region of a derived predictor \cite{DBLP:conf/nips/HardtPNS16})}
Consider a two-dimensional convex region that is spanned by the (FP,TP)$(\theta)$-curve and a line segment from $(0,0)$ to $(1,1)$. The (FP,TP) of a derived predictor $\theta'$ lies in the convex region.
\end{theorem}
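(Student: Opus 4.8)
The plan is to reduce a derived predictor to the only object the oblivious post-processing of \cite{DBLP:conf/nips/HardtPNS16} is allowed to manipulate, namely a randomized decision rule that depends on the score alone. Fixing the group $s$ and suppressing it, I would encode such a rule by an acceptance probability $p:\Real\to[0,1]$, where $p(\theta)$ is the chance an applicant with score $\theta$ is accepted; the natural post-processing thresholds the score with randomization at the boundary, so $p$ may be taken nondecreasing (a sensible rule never accepts a lower score more often than a higher one). The induced rates are the linear functionals
\begin{equation*}
\FP(p)=\int p(\theta) f_{u}(\theta)\,\rd\theta,\qquad \TP(p)=\int p(\theta) f_{q}(\theta)\,\rd\theta .
\end{equation*}
Since the admissible $p$ form a convex set and $p\mapsto(\FP(p),\TP(p))$ is linear, the attainable set of $(\FP,\TP)$ pairs is convex, and the theorem becomes a statement about its boundary.

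First I would decompose an arbitrary monotone rule into thresholds by the layer-cake identity: $p(\theta)=\int_{0}^{1}\Ind(\theta>t_u)\,\rd u$, where monotonicity makes each super-level set a half-line $(t_u,\infty)$ and $t_u$ is its left endpoint. Feeding this into the two functionals shows that $(\FP(p),\TP(p))$ is an average over $u$ of the threshold points $(\FP,\TP)(t_u)$, each of which lies on the $(\FP,\TP)(\theta)$-curve. Hence every derived predictor lands in the convex hull of that curve. This already proves the containment up to identifying the convex hull, and since each threshold rule $\Ind(\theta>t)$ is itself admissible, the containment is in fact tight.

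It remains to show that under the MLRP the convex hull of the curve is exactly the region bounded above by the curve and below by the segment from $(0,0)$ to $(1,1)$. Here I would compute the slope of the curve, $\frac{\rd\,\TP}{\rd\,\FP}=\frac{f_{q}(t)}{f_{u}(t)}$, recognise it as the likelihood ratio, and invoke Assumption \ref{asm_mlrp}: as $\FP$ increases the threshold $t$ falls and $f_{q}/f_{u}$ strictly decreases, so the curve is concave. A concave arc joining $(0,0)$ and $(1,1)$ lies above its chord, the diagonal segment, so its convex hull is precisely the lens between the curve and the segment, completing the proof. I expect the concavity step — extracting it from the MLRP through the likelihood-ratio slope — to be the crux, since that is exactly where Assumption \ref{asm_mlrp} is indispensable; justifying the restriction to monotone rules and the convex-hull bookkeeping are then routine.
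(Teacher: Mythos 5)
First, for calibration: the paper itself offers no proof of this theorem --- it is imported verbatim from Hardt et al.\,\cite{DBLP:conf/nips/HardtPNS16}, and the surrounding text only ever uses the achievability direction (``any $\theta'$ \dots is available as long as it is under the ROC curve of $\theta$''). So there is no in-paper argument to compare against, and your proposal must stand on its own. Most of it does: encoding a post-processed rule as an acceptance probability $p(\theta)\in[0,1]$, noting that $\FP$ and $\TP$ are linear in $p$, decomposing a monotone $p$ into threshold rules via the layer-cake identity, and extracting concavity of the ROC curve from Assumption \ref{asm_mlrp} through the slope $f_q/f_u$ are all correct, and this is essentially the same randomized-threshold argument that underlies Hardt et al.'s construction; your tightness remark also supplies the converse (achievability) direction, which is the one the paper actually relies on.

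The genuine gap is your final sentence, where you call ``justifying the restriction to monotone rules'' routine. It is not routine; it is a hypothesis that cannot be discharged, because the stated containment is false without it. A derived predictor in Hardt et al.'s sense is an arbitrary (possibly randomized) function of $\theta$ alone, and the score-inverting rule $\theta'=\Ind(\theta<t)$ is such a function: its rates are $(\FP,\TP)=(F_u(t),F_q(t))$, which under MLRP lies strictly below the diagonal for interior $t$ (strict concavity puts the ROC curve strictly above the diagonal, i.e., $F_q(t)<F_u(t)$), hence outside the claimed region. For arbitrary derived rules the achievable set is the convex hull of the ROC curve together with its point reflection through $(1/2,1/2)$ (obtained from complements of rules), and it shrinks to the stated lens only if one restricts the rule class, e.g., to monotone/randomized-threshold post-processings --- which is what Hardt et al.'s procedure actually outputs, and what you should state as an explicit hypothesis rather than wave through as ``sensible.'' Alternatively, for arbitrary rules you can still salvage the upper half of the containment, $\TP \le \mathrm{roc}(\FP)$, directly from the Neyman--Pearson lemma (under MLRP the threshold test is most powerful at any given $\FP$), but the lower boundary is then the reflected curve, not the segment from $(0,0)$ to $(1,1)$.
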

In other words, any $\theta'$ of an ROC curve is available as long as it is under the ROC curve of $\theta$. The EO policy is formalized as follows:
\begin{definition}{\rm (Equalized odds)}
The firm's policy is said to be odds-equalized when a (derived) predictor $\theta'$ satisfies
  $\FP_{s=0}(\theta') =  FP_{s=1}(\theta')$ and $\TP_{s=0}(\theta') =  TP_{s=1}(\theta')$, 
and the assignment based on the derived signal $\theta'$ is color-blind.
\label{eodds}
\end{definition}
The following theorem states that the EO does not generate disparity: There exists no asymmetric equilibrium under a derived predictor of EO.
\begin{theorem}
 For any equilibrium under EO, $\pi_0 = \pi_1$ holds.
\label{thm_eodds}
\end{theorem}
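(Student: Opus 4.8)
The plan is to collapse the claim to a single observation about the ROC curve: the effort incentive of an applicant depends on the classifier \emph{only} through its false-positive and true-positive rates, and the EO constraint forces these to coincide across groups. First I would translate the incentive $\tilde{c}_s(\tilde{\theta}_s)$ from Section \ref{subsec_applicant} into the language used to state EO. Since $\FP_s(\tilde{\theta}_s) = \Prob[\theta > \tilde{\theta}_s \mid s, e=0] = 1 - F_{u,s}(\tilde{\theta}_s)$ and $\TP_s(\tilde{\theta}_s) = 1 - F_{q,s}(\tilde{\theta}_s)$, the expected gain from exerting effort rewrites as
\[
\tilde{c}_s(\tilde{\theta}_s) = \omega\left[F_{u,s}(\tilde{\theta}_s) - F_{q,s}(\tilde{\theta}_s)\right] = \omega\left(\TP_s - \FP_s\right).
\]
This is the key identity: the marginal value of investing equals $\omega$ times the vertical gap between the true-positive and false-positive rates at the firm's operating point.

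Next I would re-derive the applicant's best response for the \emph{derived} predictor $\theta'$ rather than a raw threshold on $\theta$. An applicant of group $s$ who exerts effort is accepted with probability $\TP_s(\theta')$ and earns $\omega\,\TP_s(\theta') - c$, while one who does not is accepted with probability $\FP_s(\theta')$ and earns $\omega\,\FP_s(\theta')$; hence effort is optimal iff $c < \omega(\TP_s - \FP_s)$, and the equilibrium fraction is $\pi_s = G\big(\omega(\TP_s - \FP_s)\big)$. The entire $s$-dependence of this fixed-point equation enters through the pair $(\FP_s, \TP_s)$, because $G$ is the common cost distribution $\min\!\big(1, (\cdot - \underline{c})/(\bar{c}-\underline{c})\big)$ shared by both groups. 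Invoking Definition \ref{eodds}, EO stipulates $\FP_{s=0}(\theta') = \FP_{s=1}(\theta')$ and $\TP_{s=0}(\theta') = \TP_{s=1}(\theta')$, so $\tilde{c}_0 = \omega(\TP_0 - \FP_0) = \omega(\TP_1 - \FP_1) = \tilde{c}_1$, and therefore $\pi_0 = G(\tilde{c}_0) = G(\tilde{c}_1) = \pi_1$, as claimed.

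The step I expect to require the most care is the passage from a deterministic threshold on $\theta$ to the derived predictor $\theta'$. The post-processing of Hardt et al.\,\cite{DBLP:conf/nips/HardtPNS16} generally produces a \emph{randomized} rule whose construction is itself group-dependent, so the formula $\tilde{c}_s = \omega[F_{u,s}(\tilde{\theta}_s)-F_{q,s}(\tilde{\theta}_s)]$ does not apply verbatim. The substantive content of the argument is thus to verify that the applicant's incentive under any such derived rule still factors through $(\FP_s,\TP_s)$ alone---i.e.\ that exerting effort shifts an applicant from the unqualified to the qualified conditional distribution and hence from acceptance probability $\FP_s$ to $\TP_s$, independently of the internal randomization. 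Once this bookkeeping is in place, the equalization of the two rates under EO is exactly what erases the group index from the best-response map, and $\pi_0 = \pi_1$ follows immediately; the color-blind clause of Definition \ref{eodds} is only needed to guarantee that a single such operating point is shared by both groups.
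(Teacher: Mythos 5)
Your proof is correct and takes essentially the same route as the paper: the paper's one-line proof likewise observes that under EO the incentive $\tilde{c}_s(\tilde{\theta}') = \omega\left(F_{u,s}(\tilde{\theta}') - F_{q,s}(\tilde{\theta}')\right) = \omega\left(\TP_s - \FP_s\right)$ is identical across the two groups, so $\pi_s = G(\tilde{c}_s(\tilde{\theta}'))$ coincides. Your extra care about the randomized derived predictor's incentive factoring through $(\FP_s, \TP_s)$ alone is a refinement the paper glosses over, but it is the same argument, not a different one.
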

\begin{proof}
Let $\tilde{\theta}'$ be the threshold at an equilibrium.
From EO, $\tilde{c}_s(\tilde{\theta}') = \omega (F_{s,u}(\tilde{\theta}') - F_{s,q}(\tilde{\theta}'))$ is identical for two groups $s=0,1$, and thus $\pi_s = G(\tilde{c}_s(\tilde{\theta}'))$ is also identical.
\end{proof}
Note that Hardt et al.\,\cite{DBLP:conf/nips/HardtPNS16} also proposed a policy called equalized opportunity that only requires the equality of TP.
By definition, any predictor of the equalized odds satisfies the equalized opportunity, but not vice versa. Unlike the equalized odds, the equalized opportunity can result in $\pi_0 \ne \pi_1$. 

\section{Simulation}
\label{sec_simulation}
\begin{figure}[t]
    \begin{center}
        \subfloat[The ROC curve]{
            \includegraphics[width=0.3\textwidth]{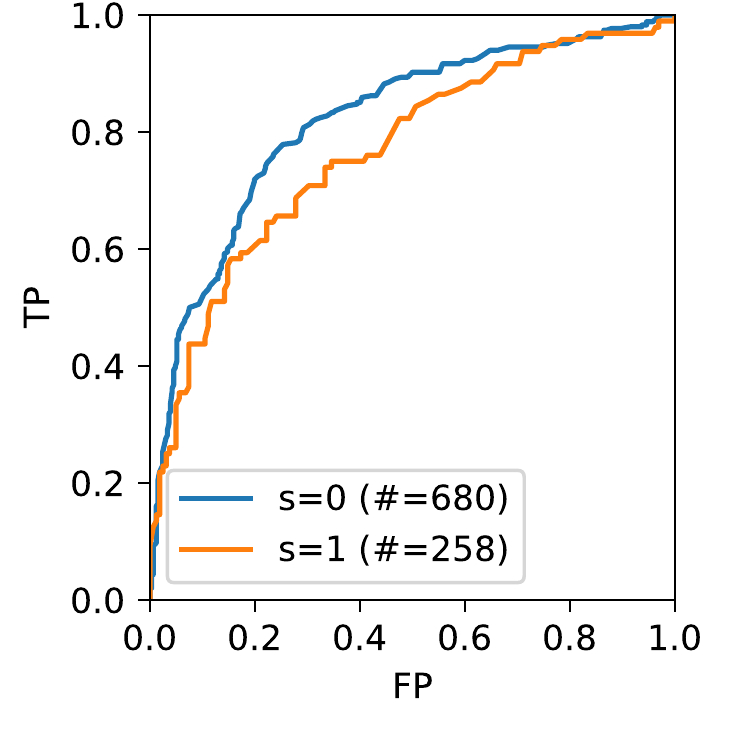}
        }
        \subfloat[$s=0$]{
            \includegraphics[scale=0.4]{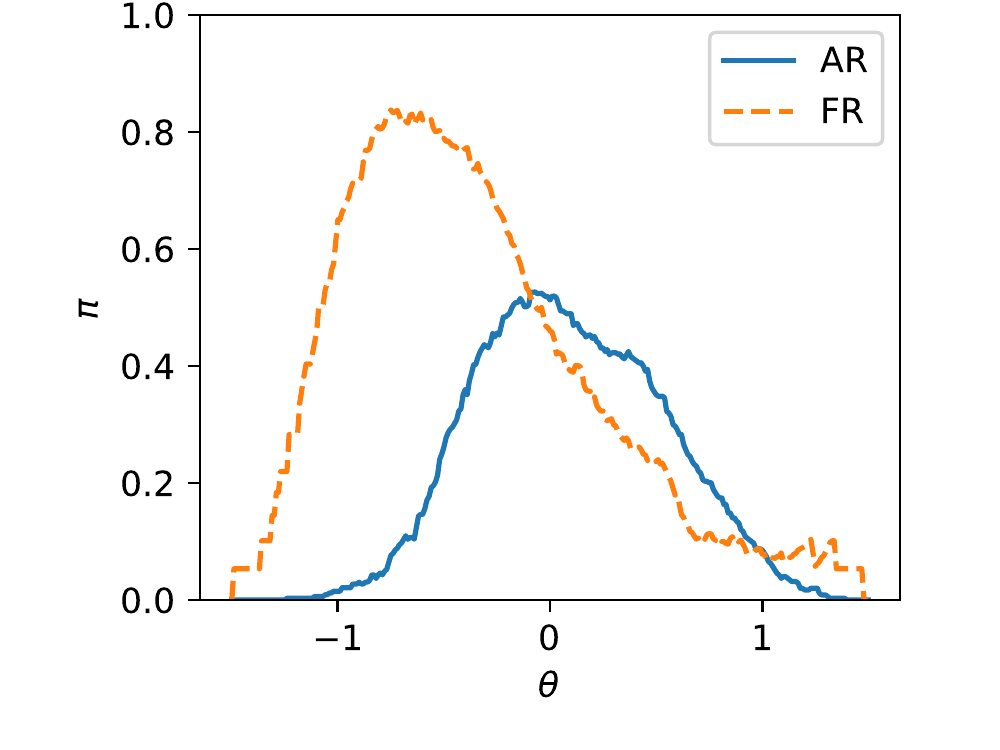}
        }
        \subfloat[$s=1$]{
            \includegraphics[scale=0.4]{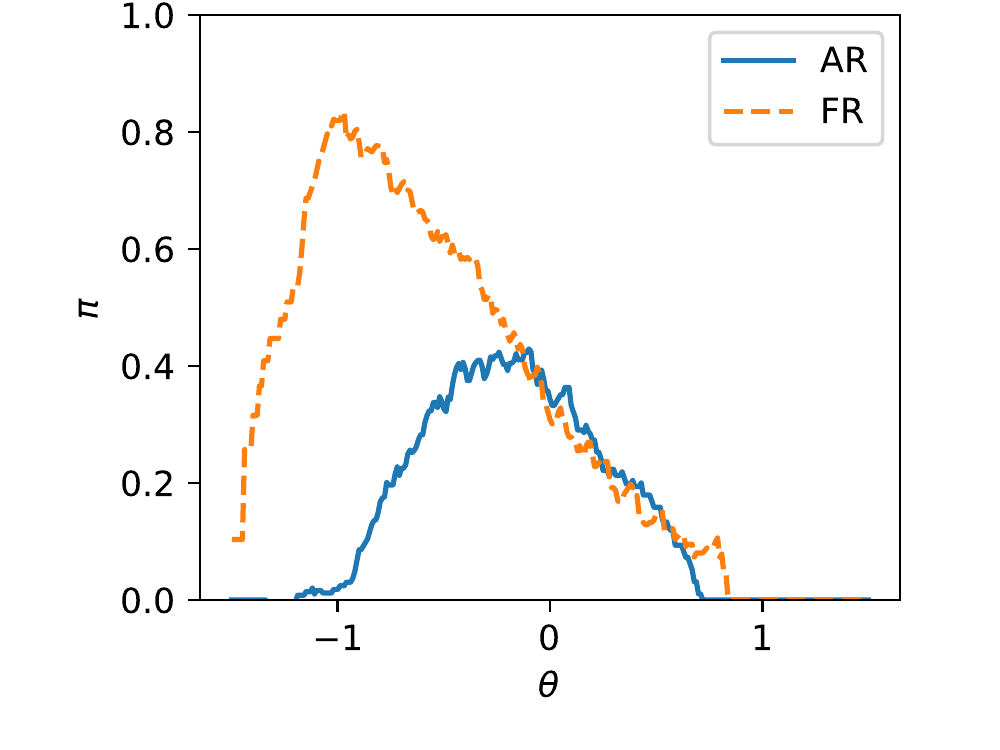}
        }
    \end{center}    
\caption{(a) The ROC curves of a predictor trained with the NLSY dataset. Details of the dataset and the settings are described in Section \ref{sec_simulation}. One can confirm that the convexity of the ROC curve is equivalent to MLRP. In the figure, the two ROC curves are fairly close to convex. (b)(c) The WR and AR curves estimated from the NLSY dataset.}
\vspace{-1.5em}
\label{fig_wwee}
\end{figure}%
To assess the social welfare and disparity on the equilibrium of the LF, CB, DP, and EO policies, we conducted numerical simulations.

\begin{table}[b!]
\caption{Results of the policies. The social welfare (SW), the welfare of the firm (FW), and disparity $|\pi_0 - \pi_1|$ of the best equilibrium are shown. We set $\lambda_0 = 2028/(2028+782)$.}
\begin{center}
  \begin{tabular}{|c|c|c|c|c|} \hline
   policy & LF & CB & DP & EO \\\hline\hline
   Disparity & $9.8$\% & $8.8$\% & $12.8$\% & $0$\%  \\\hline
   SW & $2338.6$ & $2304.7$ & $2411.2$ & $1412.2$ \\\hline
   RW & $1614.3$ & $1596.9$ & $1564.8$ & $1004.3$ \\\hline
  \end{tabular}
\end{center}
\label{tbl_results}
\end{table}%

\textbf{Dataset and Settings:} We used the National Longitudinal Survey of Youth (NLSY97) dataset retrieved from https://www.bls.gov/nls/ that involves survey results by the U.S. Bureau of Labor Statistics that is intended to gather information on the labor market activities and other life events of several groups. We model a virtual company's hiring decision assuming that the company does not have access to the applicants' academic scores.
We set $y$ to be whether each person's GPA is $\ge 3.0$ or not.
Sensitive attribute $s$ is the race of the person ($s=0$: white, $s=1$: black or African American). We have total 2,028 (resp. 782) people of $s=0$ (resp. $s=1$). and $X$ to be demographic features comprised of their school records, attitude towards life (voluntary and anti-moral activities of themselves and their peers), and geographical information during 1997 (corresponding to their late teenage).
The reward $v_q$ (resp. $v_u$) are chosen to be $53097 - 46640$ (resp. $46640 - 40604$) dollars, which is the gap of the average income in 2015 (corresponding to their early thirties) between the people of GPA $\ge 3.0$ (resp. $< 3.0$) from all people: If a job market is in perfect competition, the wage is equivalent to the productivity of the workers that a company hires, and hiring a worker yields reward that is a gap between his or her productivity and the average wage. $\omega$ are chosen to be $46640 - 40604$ dollars, which models the gap between the salary of the firm and the minimum wage they would be able to obtain with minimal effort. The cost distribution is chosen to be uniform distribution from $0$ to $\max_{s,\theta} \omega (F_{u,s}(\theta) - F_{q,s}(\theta))$, as the applicants of a cost above this value never exert effort. Note that our results are not very sensitive to these settings as long as multiple equilibria exist.
We used the RidgeCV classifier of the scikit-learn library \cite{scikit-learn} to yield $\theta$. The two thirds of the people are used to train the classifier, and the following results are tested by using the rest of them.

\textbf{Results:}
From the ROC curve is shown in Figure \ref{fig_wwee} (a) one can see that the accuracy of the classifier varies among two groups: The GPA of the majority $s=0$ is more predictable than the minority. This might come from the fact that a classifier minimizes the cumulative empirical loss, and as a result it tends to fit to the majority. 
Figure \ref{fig_wwee} (b)(c) shows the best response of the applicants and the firm under LF. Generally, equilibrium values of $\pi_0$ is larger than that of $\pi_1$. As a result, the social welfare per person in $s=0$ is usually larger than that of $s=1$. Note that, in estimating the FR curve, we applied some averaging to make $f_{e,s}(\theta)$ stable.

Based on the $F_{e,s}(\theta)$ and $f_{e,s}(\theta)$ in Figure \ref{fig_wwee}, we conducted simulation to confirm the social welfare (SW) and disparity measured by $|\pi_0 - \pi_1|$ (Table \ref{tbl_results}). In finding equilibria, we discretized $\theta$ and sought where the best response curves intersected.
One can see that (i) The result of CB and DP are more or less the same as the one of LF: They did not remove disparity. DP even increases the disparity. Unlike these policies, EO is disparity-free.
(ii) EO, which is the only policy that does not yield disparity, results in the smallest SW. This result is not very surprising because EO reduces the predictive power of the classifier for $s=0$ to match up with that for $s=1$, which we may consider as a price of achieving incentive-level equality.
Somewhat surprisingly, DP slightly increases SW, about which we discuss in Appendix \ref{app_swdec}.

\section{Conclusion}

We have studied a game between many applicants and a firm, which models human-related decision-making scenes such as university admission, hiring,  and credit risk assessments. Our framework that ties two lines of work in the theoretical labor economics and the machine learning provides a method to compare existing (or future) non-discriminatory policies in terms of their social welfare and disparity. The framework is rather simple, and many extensions can be considered (e.g., making the investment $e$ continuous value).
Although we show that EO is the only available policy that does not yield disparity, it tends to reduce social welfare. Interesting directions of possible future work include a proposal of policy that balances the social welfare and disparity: A policy with minimal or no loss of social welfare that has a small disparity is desirable. Another possible line of future work lies in evaluating policies in online settings, such as multi-armed bandits \cite{DBLP:conf/sigecom/KannanKMPRVW17}.

\clearpage
\section*{Acknowledgement}

The authors gratefully thank Hiromi Arai and Kazuto Fukuchi for useful discussions and insightful comments.

\bibliographystyle{unsrt}
\bibliography{main}
\clearpage

\appendix

\begin{center}
{\large \bf
Appendix of ``Comparing Fairness Criteria Based on Social Outcome''
}
\end{center}

\section{Does Non-discriminatory Policies Decrease Social Welfare?}
\label{app_swdec}

We first explain the reason why DP sometimes increases the social welfare.
In Figure \ref{fig_eewwcurve} of the main paper, $\tilde{\theta}_s^{(1)}$ lies in the regions where the AR curve is increasing. Intuitively, this means that around $\tilde{\theta}_s^{(1)}$ making the requirement stricter encourages the applicants to invest in skills. Compared to LF, under DP the employer imposes a milder threshold on the disadvantaged group and stricter threshold on the advantaged group. Given the advantaged group is in $\theta_s^{(1)}$, adopting DP encourages their investment, which can result in the improvement in the overall productivity of the group. When $\lambda_1$ is very small, this effect offsets the loss of the efficiency due to hiring the minorities of less productivity, which sometimes result in an improvement of SW.
As to the equalized odds, there can be some corner-case examples such that $TP_{s=0}(\theta') > TP_{s=1}(\theta')$ and $SW_0 < SW_1$ depending on the shapes of the FR and AR curves: In such a case, EO increases the social welfare.
In summary, when $\lambda_1$ is small, DP sometimes improves SW as we saw in our experiment (Table \ref{tbl_results}). EO can increase SW in some corner-case, but we think such a case is very unusual.

\end{CJK*}
\end{document}